\renewcommand{\cite}{\citep}
\definecolor{darkgreen}{rgb}{0,0.5,0}
\definecolor{darkred}{rgb}{0.7,0,0}
\definecolor{teal}{rgb}{0.3,0.8,0.8}
\definecolor{orange}{rgb}{1.0,0.5,0.0}
\definecolor{purple}{rgb}{0.8,0.0,0.8}
\newcommand{\kibitz}[2]{\ifnum\Comments=1{\textcolor{#1}{\textsf{\footnotesize #2}}}\fi}
\title{Contrastive estimation reveals topic posterior information \\ to linear models}
\author[1]{Christopher Tosh\thanks{c.tosh@columbia.edu}}
\author[2]{Akshay Krishnamurthy\thanks{akshaykr@microsoft.com}}
\author[1]{Daniel Hsu\thanks{djhsu@cs.columbia.edu}}
\affil[1]{Columbia University, New York, NY}
\affil[2]{Microsoft Research, New York, NY}
\begin{document}

\maketitle

\vspace{-2em}
\begin{abstract}
Contrastive learning is an approach to representation learning that utilizes naturally occurring similar and dissimilar pairs of data points to find useful embeddings of data. In the context of document classification under topic modeling assumptions, we prove that contrastive learning is capable of recovering a representation of documents that reveals their underlying topic posterior information to linear models. We apply this procedure in a semi-supervised setup and demonstrate empirically that linear classifiers with these representations perform well in document classification tasks with very few training examples.
\end{abstract}

\section{Introduction}
\label{sec:intro}
Using unlabeled data to find useful embeddings is a central challenge in the field of representation learning. Classical approaches to this task often start by fitting some type of structure to the unlabeled data, such as a generative model or a dictionary, and then embed future data by performing inference using the fitted structure~\cite{BNJ03, RBLPN07}. While this approach has sometimes enjoyed good empirical performance, it is not without its drawbacks. One issue is that learning structures and performing inference is often hard in general~\cite{SR11, AGM12}. Another issue is that we must a priori choose a structure and method for fitting the unlabeled data, and unsupervised methods for learning these structures can be sensitive to model misspecification~\cite{KRS14}.

{Contrastive learning} (also called noise contrastive estimation, or NCE) is an alternative approach to representation learning that tries to capture the latent structure in unlabeled data implicitly. Informally, contrastive learning methods formulate a classification problem in which the goal is to distinguish examples that naturally occur in pairs, called positive samples, from randomly paired examples, called negative samples. The particular choice of positive samples depends on the setting. In image representation problems, for example, neighboring frames from videos may serve as positive examples~\cite{WG15}. In text modeling, the positive samples may be neighboring sentences~\cite{LL18, DCLT18}. The idea is that in the course of learning to distinguish between semantically similar positive examples and randomly chosen negative examples, the representations constructed along the way will capture some of that latent semantic information.

In this work, we consider contrastive learning for document modeling where we have a corpus of text documents and our goal is to construct a useful vector representation for these documents. In this setting, there is a natural source of positive and negative examples: a positive example is simply a document from the corpus, and a negative example is one formed by pasting together the first half of one document and the second half of another document. We prove that when the corpus is generated by a topic model, learning to distinguish between these two types of documents yields representations that are closely related to their underlying latent variables. In fact, we show that linear functions of these representations can approximate the posterior mean of any continuous function of the latent variables.

One potential application of contrastive learning is in a semi-supervised setting, where there is a small amount of labeled data as well as a much larger collection of unlabeled data. In these situations, purely supervised methods that fit complicated models may have poor performance due to the limited amount of labeled data. On the other hand, when the labels are well-approximated by some function of the latent structure, our results show that an effective strategy is to fit linear functions, which may be learned with relatively little labeled data, on top of contrastive representations. In our experiments, we verify empirically that this approach produces reasonable results.

\subsection{Related work}

There has been much work on reducing unsupervised problems to synthetically-generated supervised problems. In dynamical systems modeling, \citet{LSZ09} showed that if one can solve a few forward prediction problems, then it is possible to track the underlying state of a  nonlinear dynamical system. In anomaly/outlier detection, a useful technique is to learn a classifier that distinguishes between true samples from a distribution and fake samples from some synthetic distribution~\cite{SHS05, AZL06}. Similarly, estimating the parameters of a probabilistic model can be reduced to learning to classify between true data points and randomly generated points~\cite{GH10}.

In the context of natural language processing, methods such as skip-gram and continuous bag-of-words turn the problem of finding word embeddings into a prediction problem~\cite{MCCD13,MSCCD13}. Modern language representation training algorithms such as BERT and QT  also use naturally occurring classification tasks such as predicting randomly masked elements of a sentence or discriminating whether or not two sentences are adjacent~\cite{DCLT18, LL18}. Training these models often employs a technique called negative sampling, in which softmax prediction probabilities are estimated by randomly sampling examples; this bears close resemblance to the way that negative examples are produced in contrastive learning.

Most relevant to the current paper, \citet{AKKPS19} gave a theoretical analysis of contrastive learning. They considered the specific setting of trying to minimize the contrastive loss
\[ L(f) \ = \ \EE_{x, x_+, x_-} [\ell \left( f^\top(x) (f(x_+) - f(x_-) ) \right)] \]
where $(x, x_+)$ is a positive pair and $(x,x_-)$ is a negative pair. They showed that if there is an underlying collection of latent classes and positive examples are generated by draws from the same class, then minimizing the contrastive loss over embedding functions $f$ yields good representations for the classification task of distinguishing latent classes.

The main difference between our work and that of~\citet{AKKPS19} is
that we adopt a generative modeling perspective and induce the
contrastive distribution naturally, while they do not make generative
assumptions but assume the contrastive distribution is directly
induced by the downstream classification task. In particular, our
contrastive distribution and supervised learning problem are only
\emph{indirectly} related through the latent variables in the
generative model, while~\citeauthor{AKKPS19} assume an explicit connection.
The focus of our work is therefore complementary to theirs: we study the
types of functions that can be succinctly expressed with the
contrastive representation in our generative modeling setup. In
addition, our results apply to semi-supervised regression, but it is
unclear how to define their contrastive distribution in this setting;
this makes it difficult to apply their results here.

\subsection{Overview of results}

In Section~\ref{sec:algorithm}, we present a simple contrastive learning procedure that is based on learning a function to determine if two bag-of-words vectors were generated by randomly partitioning a document or if they came from two different documents. We also present a way to turn the outputs of such a function into an embedding of future documents.

In Section~\ref{sec:topic-structure}, we show that under certain topic modeling assumptions, the document embeddings we construct from contrastive learning capture underlying topic structure. In particular, we demonstrate that linear functions of these embeddings are capable of representing any polynomial of the topic posterior vector.

In Section~\ref{sec:errors}, we analyze the errors that arise in the finite sample setting. We show that whenever we can achieve low prediction error on the contrastive learning task, linear functions learned on the resulting representations must also be high quality.

In Section~\ref{sec:experiments}, we apply our contrastive learning procedure to a semi-supervised document classification task. We show that these embeddings outperform several natural baselines, particularly in the low labeled data regime. We also investigate the effect of contrastive model capacity and model performance on the contrastive task on embedding quality.

In Section~\ref{sec:simulations}, we investigate the effects of model capacity and corpus size on a simulated topic recovery task. We demonstrate that increasing either of these quantities leads to an improvement in topic recovery accuracy.
\section{Setup}
\label{sec:setup}

Let $\Vcal$ denote a finite vocabulary, and take $\Kcal$ to be a finite 
set of $K$ topics. We consider a very
general topic modeling setup, which generates documents according to
the following process. First, a topic distribution $\wb \in
\Delta(\Kcal)$ is drawn, and then each of $m$ words $x_1,\ldots,x_m$
are drawn by sampling $z_i \sim \wb$ and then $x_i \sim O(\cdot |
z_i)$. The parameters of this model that are of primary interest
are the topic distributions $O(\cdot | k) \in \Delta(\RR^d)$. 
Note that documents need not have the same number of words.

This model is quite general and captures topic models
such as Latent Dirichlet Allocation (LDA) as well as topic models with word
embeddings. 
In LDA, the topic distributions $O(\cdot \mid k)$ are unconstrained.
When word embeddings are introduced, we set $O(\cdot \mid k) =
\textrm{softmax}(A\beta_k)$ where $A \in \RR^{|\Vcal|\times L}$ is a
latent embeddings matrix and $\beta_1,\ldots,\beta_k \in \RR^L$ are
latent ``context vectors.''  





We assume that there is a joint distribution $\Dcal$ supported on
triples $(\xb,\wb,\ell)$ where $\xb$ is a document, $\wb$ is the topic
distribution and $\ell$ is a label.
Triples are generated by first sampling $(\wb,\xb)$ from the
above topic model, and then sampling $\ell$ from some conditional
distribution that depends on the topics $\wb$, denoted $\Dcal(\cdot
\mid \wb)$. Our goal is to characterize the functional forms of
conditional distribution that are most suited to contrastive learning.

In the semi-supervised setting, we are given a collection $\Ucal
\defeq \{\xb_1,\ldots,\xb_{n_U}\}$ of unlabeled documents sampled from
the marginal distribution $\Dcal_{\xb}$, where topics and labels are
suppressed. We also have access to $n_L \ll n_U$ \emph{labeled}
samples $\Lcal \defeq \{(\xb_1,\ell_1),\ldots,(\xb_{n_L},\ell_{n_L}\}$
sampled from the distribution $\Dcal_{\xb,\ell}$, where only the topics are
suppressed. In both datasets, we never observe any topic distributions
$\wb$. From this data, we would like to learn a predictor 
$f: \xb \mapsto \hat{\ell}$ that predicts the label given the document.

\section{Contrastive learning algorithm}
\label{sec:algorithm}

\begin{algorithm}[t]
\caption{Contrastive Estimation with Documents}
\label{alg:main}
\begin{algorithmic}
\STATE \textbf{Input:} Corpus $\Ucal = \{\xb_i\}$ of documents.
\STATE $S = \emptyset$ 
\FOR{$i=1,\ldots,n$}
\STATE Sample $\xb_1,\xb_2 \sim \textrm{unif}(\Ucal)$. Split $\xb_i = (\xb_i^{(1)},\xb_i^{(2)})$. 
\begin{align*}
S \gets S \cup \left\{ \begin{aligned}
\{(\xb_1^{(1)},\xb_1^{(2)},1)\} & \textrm{ w.p } \nicefrac{1}{2} \\
\{(\xb_1^{(1)},\xb_2^{(2)},0)\} & \textrm{ w.p } \nicefrac{1}{2}
\end{aligned}\right.
\end{align*}
\ENDFOR
\STATE Learn $\hat{f} \gets \argmin_{f \in \Fcal} \sum_{S} (f(\xb^{(1)},\xb^{(2)}) - y)^2$
\STATE Select landmarks documents $\lb_1,\ldots,\lb_M$ and embed
\begin{align*}
\hat{\phi}(\xb) = \rbr{ \frac{\hat{f}(\xb,\lb_i )}{1-\hat{f}(\xb,\lb_i)} : i \in [M]}.
\end{align*}
\end{algorithmic}
\end{algorithm}

In contrastive learning, examples come in the form of similar and
dissimilar pairs of points, where the exact definition of
similar/dissimilar depends on the task at hand. Our construction of
similar pairs will take the form of randomly splitting a document into
two documents, and our dissimilar pairs will consist of subsampled
documents from two randomly chosen documents. In the generative
modeling setup, since the words are i.i.d.~conditional on the topic
distribution, a natural way to split a document $x$ into two is to
simply call the first half of the words $x^{(1)}$ and the second half
$x^{(2)}$. In our experiments, we split the documents randomly.

The contrastive representation learning procedure is displayed in
Algorithm~\ref{alg:main}. It utilizes a finite-sample approximation to
the following contrastive distribution.

\begin{itemize}
	\item Sample a document $x$ and partition it into $(x^{(1)}, x^{(2)})$. 
	Alternatively, we may think of our documents as coming `pre-partitioned,' and denote the
	marginal distributions of $x^{(1)}$ and $x^{(2)}$ as $\mu_1$ and $\mu_2$, respectively.
	\item With probability 1/2, output $(x^{(1)}, x^{(2)}, 1)$.
	\item With probability 1/2, sample a second document $(\tilde{x}^{(1)}, \tilde{x}^{(2)})$ 
	and output  $(x^{(1)}, \tilde{x}^{(2)}, 0)$.
\end{itemize}

We denote the above distribution over $(x,x',y)$ as $\Dcal_{c}$, and
we frame the contrastive learning objective as a least squares problem
between positive and negative examples.
\begin{equation}
\label{eqn: contrastive objective}
 \mini_f \, \EE_{(x, x', y) \sim \Dcal_{c}}\left [ \left( f(x,x') - y \right)^2\right ] 
\end{equation}

In our algorithm, we approximate this expectation via sampling and
optimize the empirical objective, which yields an approximate
minimizer $\hat{f}$ (chosen from some function class $\Fcal$). We use
$\hat{f}$ to form document representations by concatenating
predictions on a set of \emph{landmark documents}. Formally, we select
documents $l_1,\ldots,l_M$ and represent document $x$ via the mapping:
\begin{align*}
\hat{\phi}: x \mapsto \rbr{\frac{\hat{f}(x,l_i)}{1-\hat{f}(x,l_i)} : i \in [M]}.
\end{align*}
This yields the final document-level representation, which we use for
downstream tasks. 

For our analysis, let $f^\star$ denote the Bayes optimal predictor, or
the global minimizer, for Eq.~\eqref{eqn: contrastive objective}.
By Bayes' theorem we have that $g^\star := f^\star/(1-f^\star)$ satisfies the following
\begin{align*}
g^\star(x, x')  &:= \frac{ f^\star(x, x')}{1-  f^\star(x, x')} = \frac{\PP \left( y = 1 \, | \, x,  x' \right)}{ \PP \left( y = 0 \, | \, x,  x' \right) } \\
  &= \frac{\PP \left(x^{(1)} = x, x^{(2)} = x' \right) }{ \PP \left(x^{(1)} = x \right) \PP \left(x^{(2)} = x' \right)}.
\end{align*}
Letting $l_1, \ldots, l_M$ denote $M$ fixed documents, the \emph{oracle representation} of a document $x$ is
\begin{equation}
\label{eqn: landmark embedding}
g^\star(x, l_{1:M}) := (g^\star(x, l_1), \ldots, g^\star(x, l_M)).
\end{equation}
This representation takes the same form as $\hat{\phi}$ except that
the we have replaced the learned predictor $\hat{f}$ with the Bayes
optimal one $f^\star$.\footnote{Strictly speaking, we should first partition $x=(x^{(1)}, x^{(2)})$, only use landmarks that occur as second-halves of documents, and embed $x \rightarrow g^\star(x^{(1)}, l_{1:M})$. For the sake of clarity, we will ignore this technical issue here and in the remainder of the paper.}

\section{Recovering topic structure}
\label{sec:topic-structure}

In this section, we focus on expressivity of the contrastive
representation, showing that polynomial functions of the topic
posterior can be represented as \emph{linear} functions of the
representation. To do so, we ignore statistical issues and assume that
we have access to the oracle representations $g^\star(x,\cdot)$.
In the next
section we address statistical issues.


Recall the generative topic model process for a document $x$.
\begin{itemize}
	\item Draw a topic vector $w \in \Delta(\Kcal)$.
	\item For $i = 1, \ldots, \text{length}(x)$:
	\begin{itemize}
		\item Draw $z _i \sim \text{Categorical}(w)$.
		\item Draw $x_i \sim O(\cdot | z_i)$. 
	\end{itemize}
\end{itemize}

We will show that when documents are generated according to the above
model, the embedding of a document $x$ in Eq.~\eqref{eqn:
  landmark embedding} is closely related its underlying topic vector $w$.

\subsection{The single topic case}
To build intuition for the embedding in Eq.~\eqref{eqn: landmark
  embedding}, 
we first consider the case where each document's probability vector
$w$ is supported on a single topic, i.e., $w \in \{ e_1, \ldots,
e_K\}$ where $e_i$ is the $i^{\textrm{th}}$ standard basis
element. Then we have the following lemma.

\begin{lemma}
\label{lem: single topic g}
For any documents $x, x'$, 
\[ g^\star(x,x') = \frac{\eta(x)^\top \psi(x')}{\PP(x^{(2)} = x')}, \]
where $\eta(x)_k := \PP(w = e_k | x^{(1)} = x)$ is the topic
posterior distribution and $\psi(x)_k := \PP(x^{(2)} = x | w = e_k)$
is the likelihood.
\end{lemma}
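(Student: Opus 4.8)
The plan is to start from the Bayes-optimal form of $g^\star$ already derived in Section~\ref{sec:algorithm}, namely
\[
g^\star(x,x') \;=\; \frac{\PP\!\left(x^{(1)}=x,\, x^{(2)}=x'\right)}{\PP\!\left(x^{(1)}=x\right)\,\PP\!\left(x^{(2)}=x'\right)},
\]
and to rewrite the joint probability in the numerator by conditioning on the latent topic vector $w$. Since in the single-topic case $w$ is supported on $\{e_1,\ldots,e_K\}$, the law of total probability gives
\[
\PP\!\left(x^{(1)}=x,\, x^{(2)}=x'\right) \;=\; \sum_{k=1}^{K} \PP(w=e_k)\,\PP\!\left(x^{(1)}=x,\, x^{(2)}=x' \,\middle|\, w=e_k\right).
\]

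The key structural observation is that, conditioned on $w$, the words of the document are drawn i.i.d.\ from the mixture $\sum_k w_k\,O(\cdot\mid k)$, so the first half $x^{(1)}$ and the second half $x^{(2)}$ are \emph{conditionally independent} given $w$. (If one prefers the ``pre-partitioned'' viewpoint from the definition of $\Dcal_c$, this holds by construction; otherwise it follows because the split point is a function of the document length, which carries no information about the words beyond $w$.) Hence each summand factorizes as $\PP(x^{(1)}=x\mid w=e_k)\,\PP(x^{(2)}=x'\mid w=e_k) = \PP(x^{(1)}=x\mid w=e_k)\,\psi(x')_k$. Now apply Bayes' theorem to the first factor: $\PP(w=e_k)\,\PP(x^{(1)}=x\mid w=e_k) = \PP(x^{(1)}=x)\,\PP(w=e_k\mid x^{(1)}=x) = \PP(x^{(1)}=x)\,\eta(x)_k$. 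Combining,
\[
\PP\!\left(x^{(1)}=x,\, x^{(2)}=x'\right) \;=\; \PP\!\left(x^{(1)}=x\right)\sum_{k=1}^{K}\eta(x)_k\,\psi(x')_k \;=\; \PP\!\left(x^{(1)}=x\right)\,\eta(x)^\top\psi(x').
\]
Substituting this into the expression for $g^\star$ cancels the $\PP(x^{(1)}=x)$ factor and leaves exactly $\eta(x)^\top\psi(x')\,/\,\PP(x^{(2)}=x')$, which is the claim.

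I do not expect a substantive obstacle here: the whole argument is one application of the total probability rule over the $K$ atoms of $w$, one use of conditional independence of the two halves given $w$, and one use of Bayes' theorem. The only point requiring care is stating precisely why $x^{(1)} \perp x^{(2)} \mid w$, which reduces to the i.i.d.\ word-generation assumption of the topic model together with the fact that the partition is independent of the topic assignments given $w$; everything else is bookkeeping.
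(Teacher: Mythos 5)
Your proof is correct and follows essentially the same route as the paper's: expand the joint probability over the $K$ values of $w$, use conditional independence of $x^{(1)}$ and $x^{(2)}$ given $w$, and apply Bayes' rule to absorb $\PP(w=e_k)\,\PP(x^{(1)}=x\mid w=e_k)$ into $\PP(x^{(1)}=x)\,\eta(x)_k$. The extra remark justifying why the two halves are conditionally independent given $w$ is a nice touch but does not change the argument.
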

\begin{proof}
Conditioned on the topic vector $w$, $x^{(1)}$ and $x^{(2)}$ are
independent. Thus,
\begin{align*}
g^\star(x,x') &=  \frac{\PP\left(x^{(1)}{=}x, x^{(2)}{=}x' \right)}{ \PP \left(x^{(1)}{=}x \right) \PP \left(x^{(2)}{=}x' \right)}\\
&= \sum_{k=1}^K \frac{\PP(w{=}e_k) \PP(x^{(1)}{=}x|w{=}e_k) \PP( x^{(2)}{=}x' |w{=}e_k)}{\PP \left(x^{(1)}{=}x \right) \PP \left(x^{(2)}{=}x' \right) }\\
&= \sum_{k=1}^K \frac{ \PP(w = e_k | x^{(1)} = x) \PP( x^{(2)} = x' |w = e_k)}{\PP \left(x^{(2)} = x' \right) } \\
 &= \frac{\eta(x)^\top \psi(x')}{\PP(x^{(2)} = x')},
\end{align*}
where the third equality follows from Bayes' rule.
\end{proof}

The above characterization shows that $g^\star$ contains information
about the posterior topic distribution $\eta(\cdot)$. To recover it,
we must make sure that the $\psi(\cdot)$ vectors for our landmark
documents span $\RR^K$. Formally, if $l_1, \ldots, l_M$ are the
landmarks, and we define the matrix $L \in \RR^{K\times M}$ by
\begin{equation} \label{eq:landmarkmatrix}
  L \ := \ 
\begin{bmatrix}
\frac{\psi(l_1)}{\PP(x^{(2)} = l_1)} & \cdots & \frac{\psi(l_M)}{\PP(x^{(2)} = l_M)}
\end{bmatrix},
\end{equation}
then our representation satisfies $g^\star(x, l_{1:M}) = L^\top
\eta(x)$. If our landmarks are chosen so that $L$ has rank $K$, then
there is a linear transformation of $g^\star(x, l_{1:M})$ that
recovers the posterior distribution of $w$ given $x$, i.e., $\eta(x)$. Formally, 
\begin{align*}
L^\dagger g^\star(x,l_{1:M}) = \eta(x)
\end{align*}
where $\dagger$ denotes the matrix pseudo-inverse.

There are two interesting observations here. The first is that this
argument naturally generalizes beyond the single topic setting to any
setting where $w$ can take values in a finite set $S$, which may
include some mixtures of multiple topics, though of course the number
of landmarks needed would grow at least linearly with $|S|$. The
second is that we have made no use of the structure of $x^{(1)}$ and
$x^{(2)}$, except for that they are independent conditioned on
$w$. Thus, this argument applies to more exotic ways of partitioning a
document beyond the bag-of-words approach.

\subsection{The general setting}

In the general setting, document vectors can be any probability vector
in $\Delta(\Kcal)$, and we do not hope to recover the
full posterior distribution over $\Delta(\Kcal)$. However, the
intuition from the single topic case largely carries over, and we are able
to recover the posterior moments.

Let $m_{\max}$ be the length of the longest landmark document. 
Let $S^K_m := \{ \alpha \in \ZZ^K_+ : \sum_{k} \alpha_k = m \}$ 
denote the set of non-negative integer vectors that sum to $m$ and let 
\[ S^K_{\leq m_{\max}} := \bigcup_{m=0}^{m_{\max}} S^K_m. \] 
Let $\pi(w)$ denote the degree-$m_{\max}$ monomial vector in $w$ as
\[ \pi(w) \ := \ \left( w_1^{\alpha_1} \cdots w_k^{\alpha_k}  :  \alpha \in S^K_{\leq m_{\max}} \right). \]
For a positive integer $m$ and a vector $\alpha \in S^K_m$, define the set
\[ {[m] \choose \alpha} := \left \{z \in [K]^m : \sum_{i=1}^m \ind[z_i = k] = \alpha_k  \ \ \ \forall k \in [K] \right \} .\]
Then for a document $x$ with length $m$, the degree-$m$ polynomial vector $\psi_m$ is defined by
\[ \psi_m(x) \ := \  \left( \sum_{z \in {[m] \choose \alpha}} \prod_{i =1}^m O(x_i | z_{i}) : \alpha \in S^K_m \right) \]
and let $\psi_d(x) = \vec{0}$ for all $d \neq m$. The cumulative polynomial vector $\psi$ is given by
\begin{equation} \label{eq:defnpsi}
\psi(x) \ := \ (\psi_0(x), \psi_1(x), \cdots, \psi_{m_{\max}}(x)).
\end{equation}

Given these definitions, we have the following general case analogue of Lemma~\ref{lem: single topic g}.
\begin{lemma}
\label{lem:general_case_g}
For any documents $x, x'$, 
\[ g^\star(x,x') = \frac{\eta(x)^\top \psi(x')}{\PP(x^{(2)} = x')}, \]
where $\eta(x) := \EE[ \pi(w) | x^{(1)} = x]$.
\end{lemma}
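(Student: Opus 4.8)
The plan is to mimic the proof of Lemma~\ref{lem: single topic g}, replacing the sum over the finite topic set $\{e_1,\dots,e_K\}$ with an integral over the simplex $\Delta(\Kcal)$ and using the fact that $\PP(x^{(2)}=x'\mid w)$ is, for a fixed-length document $x'$ of length $m$, a polynomial in $w$ of degree exactly $m$. Concretely, I would start from the same Bayes-rule identity
\[
g^\star(x,x') \;=\; \frac{\PP(x^{(1)}{=}x,\,x^{(2)}{=}x')}{\PP(x^{(1)}{=}x)\,\PP(x^{(2)}{=}x')},
\]
and condition on $w$ in the numerator. Since $x^{(1)}$ and $x^{(2)}$ are independent given $w$, this becomes
\[
g^\star(x,x') \;=\; \int_{\Delta(\Kcal)} \frac{\PP(x^{(1)}{=}x\mid w)\,\PP(x^{(2)}{=}x'\mid w)}{\PP(x^{(1)}{=}x)\,\PP(x^{(2)}{=}x')}\,d\PP(w)
\;=\; \EE\!\left[\frac{\PP(x^{(2)}{=}x'\mid w)}{\PP(x^{(2)}{=}x')}\,\Big|\, x^{(1)}{=}x\right],
\]
where the second step uses Bayes' rule to turn $\PP(x^{(1)}{=}x\mid w)\,d\PP(w)/\PP(x^{(1)}{=}x)$ into the posterior $d\PP(w\mid x^{(1)}{=}x)$, exactly as in the single-topic case.

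The heart of the argument is then to show that $\PP(x^{(2)}{=}x'\mid w) = \pi(w)^\top \psi(x')$, i.e.\ that the conditional likelihood of the second half-document is the inner product of the monomial vector $\pi(w)$ with the polynomial vector $\psi(x')$ defined in Eq.~\eqref{eq:defnpsi}. Here I would expand $\PP(x^{(2)}{=}x'\mid w)$ for a length-$m$ document $x'=(x'_1,\dots,x'_m)$ by summing over the latent topic assignments $z\in[K]^m$: since each word is drawn by first sampling $z_i\sim\mathrm{Categorical}(w)$ and then $x'_i\sim O(\cdot\mid z_i)$,
\[
\PP(x^{(2)}{=}x'\mid w) \;=\; \sum_{z\in[K]^m}\prod_{i=1}^m w_{z_i}\,O(x'_i\mid z_i).
\]
Grouping the sum over $z$ by the "type" $\alpha\in S^K_m$ recording how many times each topic appears, the factor $\prod_i w_{z_i}$ depends only on $\alpha$ and equals $w_1^{\alpha_1}\cdots w_K^{\alpha_K}$, so this regroups precisely into $\sum_{\alpha\in S^K_m} w_1^{\alpha_1}\cdots w_K^{\alpha_K}\,\psi_m(x')_\alpha = \pi(w)^\top\psi(x')$, using that the monomials of degree $m\le m_{\max}$ are exactly the nonzero coordinates of $\pi(w)$ that $\psi(x')$ (which is zero outside degree $m$) pairs against.

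Finally I would combine the two displays: $g^\star(x,x') = \EE[\pi(w)^\top\psi(x')/\PP(x^{(2)}{=}x')\mid x^{(1)}{=}x] = \EE[\pi(w)\mid x^{(1)}{=}x]^\top\psi(x')/\PP(x^{(2)}{=}x')$, where the last step pulls the deterministic vector $\psi(x')/\PP(x^{(2)}{=}x')$ out of the conditional expectation and recognizes $\EE[\pi(w)\mid x^{(1)}{=}x]$ as the definition of $\eta(x)$. The only real subtlety — and the step I would be most careful about — is the bookkeeping in the regrouping by type $\alpha$: one must check that $\prod_{i=1}^m w_{z_i} = \prod_k w_k^{\alpha_k}$ for every $z\in\binom{[m]}{\alpha}$ (immediate from the definition of $\binom{[m]}{\alpha}$), and that the degree-graded structure of $\psi$ (padding with $\vec 0$ at all degrees $d\ne m$) makes the inner product $\pi(w)^\top\psi(x')$ pick out exactly the degree-$\mathrm{length}(x')$ block. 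Everything else is a routine translation of the single-topic proof, with $\sum_{k=1}^K$ replaced by an integral over $w$ and $\eta$ redefined as a posterior moment rather than a posterior probability.
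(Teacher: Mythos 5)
Your proposal is correct and follows essentially the same route as the paper's own proof: factorize $\PP(x'\mid w)=\sum_{z\in[K]^m}\prod_i w_{z_i}O(x'_i\mid z_i)$, regroup by type to get $\pi(w)^\top\psi(x')$, and then apply Bayes' rule to convert the prior integral into the posterior $d\PP(w\mid x^{(1)}{=}x)$ and pull out $\psi(x')/\PP(x^{(2)}{=}x')$. The extra care you take with the type-vector bookkeeping and the degree-graded padding of $\psi$ is exactly the detail the paper glosses over with ``collecting like terms,'' so nothing is missing.
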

\begin{proof}[Proof sketch]
The proof is similar to that of Lemma~\ref{lem: single topic g},
albeit with more complicated definitions.
The key insight is that the probabability of a document
given topic factorizes as
\begin{align*}
\PP(x | w) 
 &= \sum_{z \in [K]^m} \left( \prod_{i=1}^m w_{z_i} \right) \left( \prod_{i=1}^m O(x_i | z_i) \right) \\
&=  \pi(w)^\top \psi(x) .
\end{align*}
From here, a similar derivation to Lemma~\ref{lem: single topic g}
applies. A full proof is deferred to the appendix.
\end{proof}
Therefore, we again have $g^\star(x, l_{1:M}) = L^\top \eta(x)$, but now the columns of $L$ correspond to vectors $\psi(l_i)$ from Eq.~\eqref{eq:defnpsi}.

When can we say something about the power of this representation? Our
analysis so far shows that if we choose the landmarks such that
$LL^\top$ is invertible, then our representation captures all of the
low-degree moments of the topic posterior. But how do we ensure that
$LL^\top$ is invertible? In the next theorem, we show that this is
possible whenever each topic has an associated \emph{anchor word},
i.e., a word that occurs with positive probability only within that
topic. In this case, there is a set of landmark documents $l_{1:M}$
such that any
polynomial of $\eta(x)$ can be expressed by a linear function of
$g^\star(x, l_{1:M})$.

\begin{theorem}
\label{thm: polynomial representation}
Suppose that
(i) each topic has an associated anchor word, and
(ii) the marginal distribution of $w$ has positive probability on some subset of the interior of $\Delta(\Kcal)$.
For any $d_o \geq 1$, there is a collection of $M = {{K+d_o} \choose d_o}$ landmark documents $l_1, \ldots, l_M$ such that if $\Pi(w)$ is a degree-$d_o$ polynomial in $w$, then there is a vector $\theta \in \RR^M$ such that
\[ \forall x: \langle \theta, g^\star(x, l_{1:M}) \rangle \ = \ \EE[ \Pi(w) | x^{(1)} = x].  \]
\end{theorem}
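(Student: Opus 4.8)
The plan is to build the landmark documents so that the matrix $L$ from Lemma~\ref{lem:general_case_g} has full column rank on the subspace of monomials of degree at most $d_o$, and then invoke the identity $g^\star(x, l_{1:M}) = L^\top \eta(x)$ to produce the desired linear functional. First I would reduce to the case $m_{\max} = d_o$: since a document of length $m$ contributes only the block $\psi_m$, I will choose all $M$ landmarks to have length exactly $d_o$, so that $\psi(l_i) = \psi_{d_o}(l_i)$ lives in the $|S^K_{d_o}| = \binom{K+d_o-1}{d_o-1}$-dimensional block indexed by $\alpha \in S^K_{d_o}$. (A small bookkeeping remark: the theorem states $M = \binom{K+d_o}{d_o}$, which counts monomials of degree \emph{at most} $d_o$ in $K$ variables; I would either allow landmarks of varying length $0,1,\dots,d_o$ so that the blocks $\psi_0,\dots,\psi_{d_o}$ are all hit, or note that on the probability simplex $w^\top \mathbf 1 = 1$ homogenizes any degree-$\le d_o$ polynomial to a homogeneous degree-$d_o$ one, so working purely with length-$d_o$ landmarks suffices. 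I'll take the homogenization route to keep the argument clean, and remark that either choice gives the stated $M$.)

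Next I would exhibit landmarks whose $\psi_{d_o}$ vectors span $\RR^{S^K_{d_o}}$. Here is where hypothesis (i), the existence of anchor words, does the work: for each topic $k$ let $a_k \in \Vcal$ be a word with $O(a_k \mid k) > 0$ and $O(a_k \mid k') = 0$ for $k' \neq k$. For $\alpha \in S^K_{d_o}$ define the landmark $l_\alpha$ to be the length-$d_o$ document consisting of $\alpha_k$ copies of the anchor word $a_k$ for each $k$. Then in the formula
\[
\psi_{d_o}(l_\alpha)_\beta \ = \ \sum_{z \in \binom{[d_o]}{\beta}} \prod_{i=1}^{d_o} O((l_\alpha)_i \mid z_i),
\]
every product vanishes unless each anchor word $a_k$ is matched to a coordinate with $z_i = k$, which forces $\beta = \alpha$; and for $\beta = \alpha$ the sum is a positive number (a multinomial count times $\prod_k O(a_k\mid k)^{\alpha_k} > 0$). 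Hence the matrix $(\psi_{d_o}(l_\alpha)_\beta)_{\alpha,\beta}$ is \emph{diagonal with nonzero diagonal}, so after scaling by $\PP(x^{(2)} = l_\alpha)$ (which is positive for each such landmark, since $w$ puts positive mass on the interior of the simplex by hypothesis (ii), guaranteeing every word combination has positive probability) the matrix $L$ restricted to these rows is invertible on the degree-$d_o$ block.

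Finally I would assemble the linear functional. Writing $\eta(x) = \EE[\pi(w)\mid x^{(1)}=x]$ and restricting attention to the homogeneous degree-$d_o$ coordinates $\eta_{d_o}(x) = \EE[(w^\beta)_{\beta \in S^K_{d_o}} \mid x^{(1)}=x]$, Lemma~\ref{lem:general_case_g} gives $g^\star(x, l_{1:M}) = L^\top \eta_{d_o}(x)$ with $L \in \RR^{S^K_{d_o}\times M}$ invertible. Given a degree-$\le d_o$ polynomial $\Pi(w)$, homogenize it using $\sum_k w_k = 1$ to write $\Pi(w) = \sum_{\beta \in S^K_{d_o}} c_\beta w^\beta = \langle c, \pi_{d_o}(w)\rangle$; then $\EE[\Pi(w)\mid x^{(1)}=x] = \langle c, \eta_{d_o}(x)\rangle = \langle c, L^{-\top} g^\star(x,l_{1:M})\rangle = \langle \theta, g^\star(x,l_{1:M})\rangle$ with $\theta := L^{-1} c$, which holds for every $x$ as required.

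The main obstacle I anticipate is not the algebra but verifying that the anchor-word landmarks genuinely yield positive entries in the right places and nowhere else — i.e. pinning down exactly why $\psi_{d_o}(l_\alpha)_\beta = 0$ for $\beta \neq \alpha$, which rests on the combinatorial fact that any assignment $z \in \binom{[d_o]}{\beta}$ that fails to send each anchor-word position to its unique topic kills the product. The other point needing care is the homogenization bookkeeping and the discrepancy between $\binom{K+d_o}{d_o}$ and $\binom{K+d_o-1}{d_o-1}$, which I would resolve by being explicit about whether landmarks have fixed length $d_o$ or lengths $0,\dots,d_o$; both routes recover the stated count and both make $L$ block-triangular with invertible diagonal blocks. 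Positivity of $\PP(x^{(2)} = l_\alpha)$ is where hypothesis (ii) enters, and I would spell that out rather than take it for granted.
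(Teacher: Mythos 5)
Your proposal is correct and takes essentially the same route as the paper: anchor-word landmark documents indexed by monomial vectors $\alpha$, the key observation that $\psi(l_\alpha)_\beta \neq 0$ if and only if $\beta = \alpha$ (so $L$ is diagonal and invertible, with assumption (ii) supplying $\PP(x^{(2)}=l_\alpha)>0$), and then $\theta = L^{-1}c$; the paper simply takes one landmark for every $\alpha$ with $\sum_k \alpha_k \le d_o$ (lengths $0,\dots,d_o$) instead of homogenizing to length exactly $d_o$, which is an immaterial variation. The only nit is the count $|S^K_{d_o}| = \binom{K+d_o-1}{d_o}$ rather than $\binom{K+d_o-1}{d_o-1}$, which does not affect the argument.
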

Combining Theorem~\ref{thm: polynomial representation} with the Stone-Weierstrass theorem~\cite{S48} shows that, in principle, we can approximate the posterior mean of any continuous function of the topic vector using our representation.

\begin{proof}[Proof of Theorem~\ref{thm: polynomial representation}]

By assumption (i), there exists an anchor word $a_k$ for each topic $k=1,\ldots, K$. By definition this means that $O(a_k | j) > 0$ if and only if $j = k$. For each vector $\alpha \in \ZZ_+^K$ such that $\sum \alpha_k \leq d_o$, create a landmark document consisting of $\alpha_k$ copies of $a_k$ for $k=1,\ldots, K$. This will result in ${{K+d_o} \choose d_o}$ landmark documents. Moreover, from assumption~(ii), we can see that each of these landmark documents has positive probability of occurring under the marginal distribution $\mu_2$, which implies $g^\star(x, l)$ is well-defined for all our landmark documents $l$.

Let $l$ denote one of our landmark documents and let $\alpha \in \ZZ_+^K$ be its associated vector. Since $l$ only contains anchor words, $\psi(l)_\beta > 0$ if and only if $\alpha = \beta$. To see this, note that
\[ \psi(l)_\alpha = \sum_{z \in {[m] \choose \alpha}} \prod_{i =1}^m O(l_i | z_{i}) \ \geq \  \prod_{k=1}^K O(a_k | k)^{\alpha_k} > 0.  \]
On the other hand, if $\beta \neq \alpha$ but $\sum_{k} \beta_k = \sum_{k} \alpha_k$, then there exists an index $k$ such that $\beta_k \geq \alpha_k + 1$. Thus, for any $z \in {[m] \choose \beta}$, there will be more than $\alpha_k$ words in $l$ assigned to topic $k$. Since every word in $l$ is an anchor word and at most $\alpha_k$ of them correspond to topic $k$, we will have
\[  \prod_{i =1}^m O(l_i | z_{i}) \ = \ 0. \]
Rebinding $\psi(l) = (\psi_0(l),\ldots,\psi_{d_0}(l))$ and
forming the matrix $L$ using this definition, we see that $L^\top$ can
be diagonalized and inverted.

For any target degree-$d_o$ polynomial $\Pi(w)$, there exists a vector
$v$ such that $\Pi(w) = \langle v, \pi_{d_0}(w) \rangle$, where
$\pi_{d_0}(w)$ denotes the degree-$d_0$ monomial vector. Thus, we may
take $\theta = {L}^{-1} v$ and get that for any document $x$:
\begin{align*}
\langle \theta, g^\star(x, l_{1:M}) \rangle \ &= \  ({L}^{-1} v)^T L^\top \eta(x)  \\
\ &= \  \EE[\langle v, \pi_{d_0}(w)\rangle | x^{(1)} = x] \\
\ &= \ \EE[ \Pi(w) | x^{(1)} = x]. \tag*\qedhere
\end{align*}
\end{proof}

\section{Error analysis}
\label{sec:errors}

Given a finite amount of data, we cannot hope to solve Eq.~\eqref{eqn: contrastive objective} exactly. Thus, our solution $\hat{f}$ will only be an approximation to $f^\star$. Since $\hat{f}$ is the basis of our representation, the fear is that the errors incurred in this approximation will cascade and cause our approximate representation $\phi(x)$ to differ so wildly from the ideal representation $g^\star(x, l_{1:M})$ that the results of Section~\ref{sec:topic-structure} do not even approximately hold.

In this section, we will show that, under certain conditions, such fears are unfounded. Specifically, we will show that there is an {error transformation} from the approximation error of $\hat{f}$ to the approximation error of linear functions in $\hat{\phi}$. That is, if the target function is $\eta(x)^\top\theta^\star$, then we will show that the risk of our approximate solution $\hat{\phi}$, given by
\begin{align*}
  R(\hat{\phi}) \defeq \min_{v} \EE_{x \sim \mu_1} (\eta(x)^\top\theta^\star - \hat{\phi}(x)^\top v)^2,
\end{align*}
is bounded in terms of the approximation quality of $\hat{f}$ as well as some other terms. Thus, for the specific setting of semi-supervised learning, an approximate solution to Eq.~\eqref{eqn: contrastive objective} is good enough.

It is worth pointing out that \citet{AKKPS19} also gave an error transformation from approximately solving a contrastive learning objective to downstream linear prediction. Also related, \citet{LSZ09} showed that when the approximation errors for their tasks are driven to zero, their representations will be perfect. However, they did not analyze what happens when their solutions have non-zero errors. In this sense, the results in this section are closer in spirit to those of~\citet{AKKPS19}.

In order to establish our error transformation, we first need to make some assumptions. Our first assumption is a consistency guarantee on our contrastive learning algorithm.

\begin{assum}
\label{assum: learning guarantee}
For any $\delta \in (0,1)$, there is a decreasing sequence $\varepsilon_n = o_n(1)$, such that given $n$ unlabeled documents the learning algorithm outputs a function $\hat{f}$ satisfying
\[ \EE_{(x, x') \sim \Dcal_c} \left[ \left( \hat{f}(x,x') - f^\star(x,x') \right)^2 \right] \leq \varepsilon_n \]
with probability $1-\delta$.
\end{assum}

If $\hat{f}$ is chosen from a bounded capacity function class $\Fcal$ by empirical risk minimization (ERM), Assumption~\ref{assum: learning guarantee} holds whenever $f^\star \in \Fcal$. Although this assumption is not essential to our analysis, it is needed to establish consistency in a semi-supervised learning setting.

There are a number of degrees of freedom for how to choose landmark documents. We consider a simple method: randomly sample them from the marginal distribution of $x^{(2)}$. Our next assumption is that this distribution satisfies certain regularity assumptions.

\begin{assum}
\label{assum: singular value}
There is a constant $\sigma_{\min} > 0$ such that for any $\delta \in (0,1)$, there is a number $M_0$ such that for an iid sample $l_1, \ldots, l_M$ with $M\geq M_0$, with probability $1-\delta$, the matrix $L$ defined in Eq.~\eqref{eq:landmarkmatrix} (with $\psi$ as defined in Eq.~\eqref{eq:defnpsi})
has minimum singular value at least $\sigma_{\min} \sqrt{M}$.
\end{assum}
Note that the smallest non-zero singular value of $\tfrac1{\sqrt{M}} L$ is the square-root of the smallest eigenvalue of an empirical second-moment matrix,
  \[ \frac1M \sum_{j=1}^M \frac1{\PP(\xb^{(2)} = \lb_j)^2} \psi(\lb_j) \psi(\lb_j)^\top . \]
Hence, Assumption~\ref{assum: singular value} holds under appropriate conditions on distribution over landmarks, for instance via tail bounds for sums of random matrices~\citep{Tropp12} combined with matrix perturbation analysis (e.g., Weyl's inequality). In the single topic setting with anchor words, it can be shown that for long enough documents, $\sigma_{\min}$ is lower-bounded by a constant for  $M_0$ growing polynomially with $K$. We defer a detailed proof of this to the appendix.

Our last assumption is that the predictions of $\hat{f}$ and $f^\star$ are non-negative and bounded below $1$. 

\begin{assum}
\label{assum: bounded values}
There exists a value $f_{\max} \in (0,1)$ such that for all documents $x$ and landmarks $l_i$
\[ 0 < \hat{f}(x,l_i), f^\star(x, l_i) \leq f_{\max}. \]
\end{assum}
Note that if Assumption~\ref{assum: bounded values} holds for $f^\star$, then it can be made to hold for $\hat{f}$ by thresholding. Moreover, it holds for $f^\star$ whenever the vocabulary and document sizes are constants, since we have for $\Delta = 1 - f^\star(x, x')$,
\begin{align*}
\Delta &=  \frac{\PP(x^{(1)} = x) \PP(x^{(2)} = x')}{\PP(x^{(1)} = x, x^{(2)} = x') + \PP(x^{(1)} = x) \PP(x^{(2)} = x')} \\
 & \geq \frac{ \PP(x^{(2)} = x')}{1+ \PP(x^{(2)} = x')}.
\end{align*}
Since the landmarks are sampled, and there are a finite number of possible documents, there exists a constant $p_{\min} >0$ such that $ \PP(x^{(2)} = l) \geq p_{\min}$. Thus, Assumption~\ref{assum: bounded values} holds for $ f_{\max} = {1}/({1+p_{\min}})$.

Given these assumptions, we have the following error transformation guarantee. The proof is deferred to the appendix.

\begin{theorem} 
\label{thm:error}
Fix any $\delta \in (0,1)$, and suppose Assumptions~\ref{assum: learning guarantee}-\ref{assum: bounded values} hold (with $M_0$, $\sigma_{\min}$, and $f_{\max}$). If $M \geq M_0$, there is a decreasing sequence $\varepsilon_n = o_n(1)$ such that with probability at least $1-\delta$ over the random sample of $l_1, \ldots l_M$ and the procedure for fitting $\hat{f}$,
 \begin{align*}
   R(\hat{\phi}) \leq \frac{\nbr{\theta^\star}_2^2}{\sigma_{\min}^2 (1-f_{\max})^4} \rbr{2 \varepsilon_n + \sqrt{\frac{2\log(3/\delta)}{M}}}.
\end{align*}
\end{theorem}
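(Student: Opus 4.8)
The plan is to decompose the risk $R(\hat\phi)$ into two sources of error: (a) the gap between the learned embedding $\hat\phi(x)$ and the oracle embedding $g^\star(x,l_{1:M})$, controlled by Assumption~\ref{assum: learning guarantee}; and (b) the fact that, even with the oracle embedding, the target $\eta(x)^\top\theta^\star$ is recovered by a linear map $v^\star$ whose norm depends on the conditioning of $L$, controlled by Assumption~\ref{assum: singular value}. Concretely, from Lemma~\ref{lem:general_case_g} we have $g^\star(x,l_{1:M}) = L^\top\eta(x)$, so taking $v^\star := (LL^\top)^{-1}L\,\theta^\star$ gives $\langle v^\star, g^\star(x,l_{1:M})\rangle = \eta(x)^\top\theta^\star$ exactly, with $\|v^\star\|_2 \le \|\theta^\star\|_2 / \sigma_{\min}(L) \le \|\theta^\star\|_2/(\sigma_{\min}\sqrt M)$ on the $1-\delta/2$ event of Assumption~\ref{assum: singular value}. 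Then
\[
R(\hat\phi) \le \EE_{x\sim\mu_1}\big(\eta(x)^\top\theta^\star - \hat\phi(x)^\top v^\star\big)^2 = \EE_{x\sim\mu_1}\big(\langle v^\star,\, g^\star(x,l_{1:M}) - \hat\phi(x)\rangle\big)^2 \le \|v^\star\|_2^2\;\EE_{x\sim\mu_1}\big\|g^\star(x,l_{1:M}) - \hat\phi(x)\big\|_2^2.
\]

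The second factor is where Assumptions~\ref{assum: learning guarantee} and~\ref{assum: bounded values} enter. Writing out coordinate $i$, the difference is $\frac{\hat f(x,l_i)}{1-\hat f(x,l_i)} - \frac{f^\star(x,l_i)}{1-f^\star(x,l_i)}$. A short calculation shows this equals $\frac{\hat f(x,l_i) - f^\star(x,l_i)}{(1-\hat f(x,l_i))(1-f^\star(x,l_i))}$, so by Assumption~\ref{assum: bounded values} its square is at most $(1-f_{\max})^{-4}(\hat f(x,l_i) - f^\star(x,l_i))^2$. Summing over $i\in[M]$ and taking expectation over $x\sim\mu_1$,
\[
\EE_{x\sim\mu_1}\big\|g^\star(x,l_{1:M}) - \hat\phi(x)\big\|_2^2 \;\le\; \frac{1}{(1-f_{\max})^4}\sum_{i=1}^M \EE_{x\sim\mu_1}\big(\hat f(x,l_i) - f^\star(x,l_i)\big)^2.
\]
Now I need to relate $\EE_{x\sim\mu_1}(\hat f(x,l_i) - f^\star(x,l_i))^2$, which averages over $x$ with a \emph{fixed} landmark $l_i$, to the quantity controlled by Assumption~\ref{assum: learning guarantee}, namely $\EE_{(x,x')\sim\Dcal_c}(\hat f(x,x') - f^\star(x,x'))^2$, which averages over $x'$ drawn from $\mu_2$ (and $x$ from $\mu_1$). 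The landmarks $l_1,\dots,l_M$ are themselves an i.i.d.\ sample from $\mu_2$, so $\frac1M\sum_{i=1}^M \EE_{x\sim\mu_1}(\hat f(x,l_i)-f^\star(x,l_i))^2$ is an empirical average of the per-landmark errors, whose mean (over the draw of landmarks) is exactly $\varepsilon_n$ up to a constant factor from the $\Dcal_c$ mixture weight on $y=1$. Concentrating this empirical average around its mean via a bounded-differences / Hoeffding argument — the summands lie in $[0,1]$ after the thresholding in Assumption~\ref{assum: bounded values}, or more carefully in $[0, f_{\max}^2]$ — gives, on a $1-\delta/2$ event, $\frac1M\sum_i \EE_{x\sim\mu_1}(\hat f(x,l_i)-f^\star(x,l_i))^2 \le 2\varepsilon_n + \sqrt{\tfrac{2\log(3/\delta)}{M}}$ (the factor $2$ absorbing the $y=1$ mixture weight $1/2$ in $\Dcal_c$, and the $3/\delta$ leaving room for a union bound over the three events).

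Putting the pieces together: on the intersection of the three $1-\delta/2$ (resp.\ $1-\delta/3$) events, chained by a union bound to get probability $1-\delta$,
\[
R(\hat\phi) \;\le\; \frac{\|\theta^\star\|_2^2}{\sigma_{\min}^2 M}\cdot\frac{1}{(1-f_{\max})^4}\cdot M\rbr{2\varepsilon_n + \sqrt{\tfrac{2\log(3/\delta)}{M}}} \;=\; \frac{\|\theta^\star\|_2^2}{\sigma_{\min}^2(1-f_{\max})^4}\rbr{2\varepsilon_n + \sqrt{\tfrac{2\log(3/\delta)}{M}}},
\]
which is the claimed bound. I expect the main obstacle to be the third step — carefully transferring the $\Dcal_c$-average guarantee of Assumption~\ref{assum: learning guarantee} to the landmark-averaged quantity and concentrating it — since it requires being precise about the mixture structure of $\Dcal_c$ (the $y=1$ vs.\ $y=0$ split, and how $f^\star$ and $\hat f$ are only compared on second-halves) and about the fact that the $\hat f$ returned by the algorithm is random but independent of the fresh landmark sample only if the landmarks are drawn after fitting; one has to order the randomness so that the Hoeffding step is legitimate. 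The other steps are elementary: the pseudo-inverse norm bound is standard linear algebra, and the per-coordinate ratio bound is a one-line computation.
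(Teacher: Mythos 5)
Your proposal is correct and follows essentially the same route as the paper's proof: the Cauchy--Schwarz decomposition through $v^\star = L^\dagger\theta^\star$ with $\|v^\star\|_2 \le \|\theta^\star\|_2/(\sigma_{\min}\sqrt{M})$, the per-coordinate odds-ratio bound yielding the $(1-f_{\max})^{-4}$ factor, Hoeffding concentration of the landmark-averaged error around its $\mu_1\otimes\mu_2$ mean, the factor $2$ from the mixture weight in $\Dcal_c$, and a union bound over the three events. Only trivial slips: $v^\star$ should be $L^\top(LL^\top)^{-1}\theta^\star$ (i.e., $L^\dagger\theta^\star$) rather than $(LL^\top)^{-1}L\theta^\star$, and the product marginal $\mu_1\otimes\mu_2$ corresponds to the $y=0$ (negative) component of $\Dcal_c$, not $y=1$.
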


We make a few observations here. The first is that $\nbr{\theta^\star}_2^2$ is a measure of the complexity of the target function. Thus, if the target function is some reasonable function, say a low-degree polynomial, of the posterior document vector, then we would expect $\nbr{\theta^\star}_2^2$ to be small. The second is that the dependence on $f_{\max}$ is probably not very tight. Third, note that $n$ and $M$ are both allowed to grow with the amount of \emph{unlabeled} documents we have; indeed, none of the terms in Theorem~\ref{thm:error} deal with labeled data. 

Finally, if we have $n_L$ i.i.d.~labeled examples, and we learn a linear predictor $\hat{v}$ with the representation $\hat{\phi}$ using ERM (say), then the bias-variance decomposition grants
\begin{align*}
  \operatorname{mse}(\hat{v}) & \kern-1pt = \kern-1pt R(\hat{\phi}) + \kern-4pt \mathop{\EE}_{x \sim \mu_1} \kern-3pt ( \hat{\phi}(x)^\top( v^*{-}\hat{v}))^2 
=  R(\hat{\phi})  + O_P(\tfrac1{n_L})
\end{align*}
where $\operatorname{mse}(v) = \EE_{x \sim \mu_1} (\eta(x)^\top\theta^\star - \hat{\phi}(x)^\top v)^2$ and $v^*$ is the minimizer of $\operatorname{mse}(\cdot)$. The second equality comes from known properties of the ERM~\citep[see, e.g.,][]{HKZ14-ridge}.

\section{Semi-supervised experiments}
\label{sec:experiments}


We conducted experiments with our document level contrastive representations in a semi-supervised setting. In this section, we discuss the experimental details and findings. 

\subsection{A closely related representation}

One unfortunate consequence of the results in Section~\ref{sec:topic-structure} is that the number of landmarks required to obtain a useful representation can be quite large. To this end, we consider training models of the form $f_1, f_2: \Xcal \rightarrow \RR^d$ via
\begin{equation}
\label{eqn:bivariate_architecture}
\mini_{f_1, f_2} \EE_{\Dcal_{c}} \left[ \log \left(1+ \exp\left( - y  f_1(x)^\top f_2(x')    \right)  \right)\right].
\end{equation}
We will consider the alternate embedding scheme of simply taking $f_1(x)$ as our representation for document $x$. To justify this, first note that the Bayes optimal predictor $(f_1^\star,f_2^\star)$ is given by the log-odds ratio
\begin{align*}
f_1^\star(x)^\top f_2^\star(x') := \log\rbr{ \tfrac{\PP(y=1 \mid x,x')}{\PP(y=0 \mid x,x') }}.
\end{align*}
This predictor is related to our original $g^\star$ function via the
exponential:
\begin{align*}
g^\star(x,x') = \exp\left(f_1^\star(x)^\top f_2^\star(x')\right) \approx 1 + f_1^\star(x)^\top f_2^\star(x'),
\end{align*}
where the approximation comes from a Taylor expansion. Therefore,
if $l_1, \ldots, l_M$ are landmark documents, then $f^\star_1(x)$ is
approximately affinely related to $g^\star(x,l_{1:M})$:
\begin{align*}
g^\star(x,l_{1:M}) \ \approx \ \vec{1} + 
\begin{bmatrix}
  f_2^\star(l_1) & \dotsb & f_2^\star(l_M)
\end{bmatrix}^\top
f_1^\star(x).
\end{align*}
When 
the Taylor expansion is accurate, 
we can expect that the approximate minimizer $\hat{f}_1(x)$
of Eq.~\eqref{eqn:bivariate_architecture} is as good of a representation as
the version that uses landmarks. 


\subsection{Methodology}

We conducted semi-supervised experiments on the AG news topic classification dataset as compiled by~\citet{ZZL15}. This dataset contains news articles that belong to one of four categories: world, sports, business, and sci/tech. There are 30,000 examples from each class in the training set, and 1,900 examples from each class in the testing set. We minimally preprocessed the dataset by removing punctuation and words that occurred in fewer than 10 documents, resulting in a vocabulary of approximately 16,700 words. 

We randomly selected 1,000 examples from each class to remain as our labeled training dataset, and we used the remaining 116,000 examples as our unlabeled dataset for learning representations. After computing representations on the unlabeled dataset, we fit a linear classifier on the labeled training set using logistic regression with cross validation to choose the $\ell_2$ regularization parameter ($n_{\text{folds}} = 3$).


We compared our representation, \texttt{NCE}, against several representation baselines. 
\begin{itemize}
\item \texttt{BOW} -- The standard bag-of-words representation.
\item \texttt{BOW+SVD} -- A bag of words representation with dimensionality reduction. We first perform SVD on the bag-of-words representation using the unsupervised dataset to compute a low dimensional subspace, and train a linear classifier on the projected bag-of-words representations with the labeled dataset.
\item \texttt{LDA} -- A representation derived from LDA. We fit LDA on the unsupervised dataset using online variational Bayes~\cite{HBB10}, and our representation is the inferred posterior distribution over topics given training document. 
\item \texttt{word2vec} -- Skip-gram word embeddings~\cite{MSCCD13}. We fit the skip-gram word embeddings model on the unsupervised dataset and then averaged the word embeddings in each of the training documents to get their representation.
\end{itemize}

For our representation, to solve Eq.~\eqref{eqn:bivariate_architecture}, we considered neural network architectures of various depths. We used fully-connected layers with between 250 and 300 nodes per hidden layer. We used ReLU nonlinearities, dropout probability 1/2, batch normalization, and the default PyTorch initialization~\cite{PetAl19}. We optimized using RMSProp with momentum value 0.009 and weight decay 0.0001 as in~\citet{RBU19}. We started with learning rate $10^{-4}$ which we halved after 250 epochs, and we trained for 600 epochs.

To sample a contrastive dataset, we first randomly partitioned each unlabeled document in half to create the positive pairs. To create the negative pairs, we again randomly partitioned each unlabeled document in half, randomly permuted one set of half documents, and discarded collisions. This results in a contrastive dataset whose size is roughly twice the number of unlabeled documents. In the course of training our models for the contrastive task, we resampled a contrastive dataset every 3 epochs to prevent overfitting on any one particular dataset.

\subsection{Results}


Below we illustrate and discuss the results of our experiments. In all
line plots, the training examples axis refers to the number of
randomly selected labeled examples used to train the linear
classifier. The shaded regions denote 95\% confidence intervals
computed over 10 replicates of this random selection procedure.

\begin{figure*}[t]
\includegraphics[width=\textwidth]{./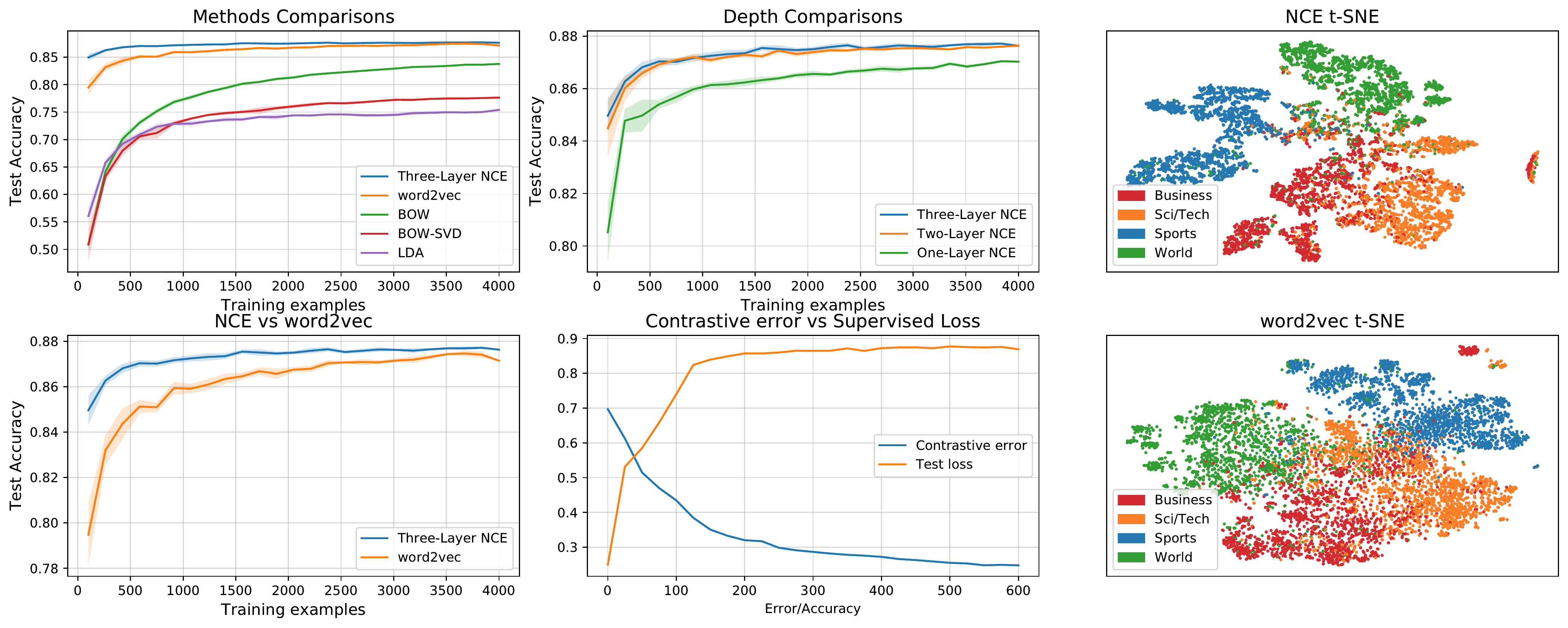}
\vspace{-1.5em}
\caption{Experiments with AG news dataset. Left panel: test accuracy
  of methods as we increase the number of supervised training
  examples. Bottom left focuses in on \texttt{NCE} versus
  \texttt{word2vec}. Top middle: \texttt{NCE} performance as we vary
  network depth. Bottom middle: Relationship between contrastive error
  and test accuracy for \texttt{NCE}. Right: t-SNE visualizations of
  \texttt{NCE} and \texttt{word2vec} embeddings.}
\label{fig:all_exp}
\vspace{-1em}
\end{figure*}

\paragraph{Baseline comparison.} 
We compared the semi-supervised perfomance of \texttt{NCE} against all
of the baselines. The left panel of Figure~\ref{fig:all_exp}
displays the results of these experiments. Among the methods tested,
\texttt{NCE} appears to outperform all the other methods, with
dramatic improvements over all methods except \texttt{word2vec} in the
low labeled data regime. Bag-of-words representations are quite
competitive when there is an abundance of labeled data, but as the
dimensionality of this representation is quite large, it has poor
performance with limited samples. However, unsupervised
dimensionality reduction on this representation appears to be 
unhelpful and actually degrades performance uniformly.

It is also worth noting that \texttt{LDA} performs quite poorly. This
could be for several reasons, including that fitting a topic model
directly could be challenging on the relatively short documents in the
corpus or that the document category is not well-expressed by a linear
function of the topic proportions.

Finally, we point out that word embedding representations
(\texttt{word2vec}) perform quite well, but our document-level
\texttt{NCE} procedure is slightly better, particularly when there are
few labeled examples. This may reflect some advantage in learning
document-level non-linear representations, as opposed to averaging
word-level ones.

\paragraph{Model capacity.}
We investigated the effect of depth on the performance of \texttt{NCE}
by training networks with one, two, and three hidden layers. In each
case, the first hidden layer has 300 nodes and the additional hidden
layers have 256 nodes. The top center panel of Figure~\ref{fig:all_exp} displays the
results. It appears that using deeper models in the unsupervised phase
leads to better performance when training a linear classifier
on the learned representations. We did not experiment exhaustively
with neural network architectures.

\paragraph{Contrastive loss.} 
We also tracked the contrastive loss of the model on a holdout
validation contrastive dataset. The bottom center panel of Figure~\ref{fig:all_exp}
plots how this loss evolves over training epochs. Along with this
contrastive loss, we checkpoint the model, train a linear classifier
on 1400 training examples, and evaluate the supervised test accuracy
as the representation improves. We see that test accuracy steadily
improves as contrastive loss decreases. This suggests that in these
settings, contrastive loss (which we can measure using an unlabeled validation
set) is a good surrogate for downstream performance (which may not be
measurable until we have a task at hand).



\paragraph{Visualizing embeddings.} 
For a qualitative perspective, we visualize the embeddings from
\texttt{NCE} using t-SNE with the default scikit-learn
parameters~\cite{MH08, scikit-learn}. To compare, we also used t-SNE
to visualize the document-averaged \texttt{word2vec}
embeddings. The right panels of Figure~\ref{fig:all_exp} shows these visualizations on the
7,600 test documents colored according to their true label. While
qualitiative, the visualization of the \texttt{NCE} embeddings appear
to be more clearly separated into label-homogeneous regions than that
of \texttt{word2vec}.

\section{Topic modeling simulations}
\label{sec:simulations}

The results of Section~\ref{sec:topic-structure} show that if a model is trained to minimize the contrastive learning objective, then that model must also recover certain topic posterior information in the corpus. However, there are a few practical questions that remain: can we train such a model, how much capacity should it have, and how much data is needed in order to train it? In this section, we present simulations designed to study these questions.

\subsection{Simulation setup}

We considered the following single topic generative model.

\begin{itemize}
	\item Draw topics $\theta_1, \ldots, \theta_K$ i.i.d. from a symmetric Dirichlet$(\alpha/K)$ distribution over $\Delta^{|\Vcal|}$.
	\item For each document:
	\begin{itemize}
		\item Draw a length $n \sim $ Poisson($\lambda$).
		\item Draw a topic $k \sim $ Uniform($[K]$).
		\item Draw $n$ words i.i.d. from $\theta_k$.
	\end{itemize}
\end{itemize}

This model can be thought of as a limiting case of the LDA model~\citep{BNJ03, GS04} when the document-level topic distribution is symmetric Dirichlet$(\beta)$ with $\beta \ll 1$. In our experiments, we set $K = 20$, $|\Vcal|=5000$, and $\lambda=30$, and we varied $\alpha$ from $1$ to 10. Notice that as $\alpha$ increases, the Dirichlet prior becomes more concentrated around the uniform distribution, so the topic distributions are more likely to be similar. Thus, we expect the contrastive learning problem to be more difficult with larger values of $\alpha$. 

We used contrastive models of the same form as Section~\ref{sec:experiments}, namely models of the form $f_1, f_2$ where the final prediction is $f_1(x)^\top f_2(x')$ and $f_1$ and $f_2$ are fully-connected neural networks with three hidden layers. To measure the effect of model capacity, we trained two models -- a smaller model with 256 nodes per hidden layer and a larger model with 512 nodes per hidden layer. Both models were trained for 100 epochs. We used all of the same optimization parameters as in Section~\ref{sec:experiments} with the exception of dropout, which we did not use.

To study the effect of training data, we varied the rate $r$ at which we resampled our entire contrastive training set from the ground truth topic model. Specifically, after every $1/r$-th training epoch, we resampled 60,000 new documents and constructed a contrastive dataset from these documents. We varied the resampling rate $r$ from 0.1 to 1.0, where larger values of $r$ imply more training data. The total amount of training data varies from 600K documents to 6M documents. 

Using the results from Section~\ref{sec:topic-structure}, we constructed the embedding $\phi(x)$ of a new document $x$ using 1000 landmark documents, each sampled from the same generative model. We constructed the true likelihood matrix $L$ of the landmark documents using the underlying topic model and recovered the model-based posterior $L^\dagger \phi(x)$. We measured accuracy as the fraction of testing documents for which the MAP topic under the model-based posterior matched the generating topic. We used 5000 testing documents and performed 5 replicates for each setting of parameters.

\subsection{Results}

Figure~\ref{fig:topic_exp} shows the results of our simulation
study. In the left panel, we plot the average pairwise topic
separation, measured in total variance distance, as a function of the
Dirichlet hyperparameter $\alpha$.  We see that, indeed as we increase
$\alpha$ the topics become more similar, which suggests that the
contrastive learning problem will become more difficult.  Then, in
the center and right panels we visualize the accuracy of the MAP estimates
on the test documents as a function of both the Dirichlet
hyperparameter $\alpha$ and the resampling rate $r$. The center panel
uses the small neural network with 256 nodes per hidden layer, while
the right panel uses the larger network.

The experiment identifies several interesting properties of the
contrastive learning approach. First, as a sanity check, the algorithm
does accurately predict the latent topics of the test documents in
most experimental conditions and the accuracy is quite high when the
problem is relatively easy (e.g., $\alpha$ is small). Second, the
performance degrades as $\alpha$ increases, but this can be mitigated
by increasing either the model capacity or the resampling
rate. Specifically, we consistently see that for a fixed model and
$\alpha$, increasing the resampling rate improves the accuracy. A
similar trend emerges when we fix $\alpha$ and rate and increase the
model capacity. These empirical findings suggests that latent
topics can be recovered by the contrastive learning approach, provided
we have an expressive enough model and enough data.


\begin{figure}[t]
\begin{center}
\includegraphics[width=0.95\textwidth]{./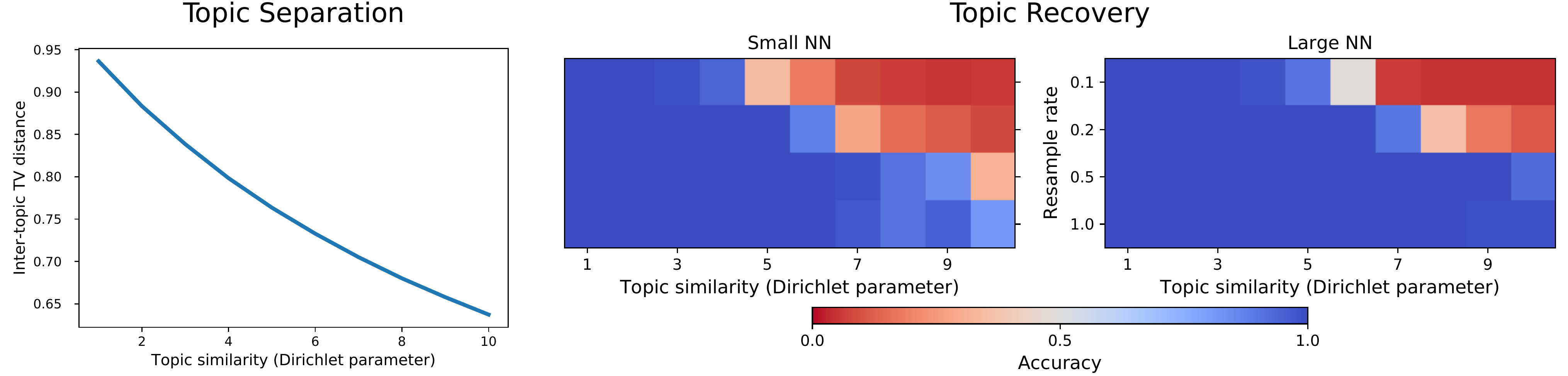}
\end{center}
\vspace{-1.5em}
\caption{Topic modeling simulations. Left: Average total variation distance between topics. Right: Topic recovery accuracy for contrastive models. Total number of documents sampled = 6M $\times$ rate.}
\label{fig:topic_exp}
\vspace{-1em}
\end{figure}


\section{Discussion}
Our analysis shows that
document-level contrastive learning
under topic modeling assumptions yields a
representation that exposes posterior topic information to linear
predictors, and hence is suitable for downstream
supervised learning.
In semi-supervised learning experiments, we show
that our contrastive learning procedure yields representations that
improve classification accuracy, and the improvement is most striking when we have few labeled examples.
We also explored the effects of model capacity and corpus size
in a simulated topic modeling study, and we showed that increasing 
either of these factors leads to higher quality topic recovery.

While we have focused on document representations and topic modeling
assumptions in this work, our analysis more generally sheds light on
the power of contrastive learning, which is empirically known to be useful in many
settings.
Aspects of our analysis may help
characterize the expressiveness of contrastive learning representations under other
modeling assumptions, for example in time-series
modeling, and we hope to
pursue these directions in future work.

\subsubsection*{Acknowledgements}

We thank Miro Dudík for initial discussions and suggesting the landmark embedding technique.
This work was partially completed while CT and DH were visiting Microsoft Research NYC, and was supported in part by NSF grant CCF-1740833.

\bibliography{refs}

\clearpage

\appendix
\section{Proofs}
\label{sec:proofs}

\subsection{Proof of general representation lemma}

\begin{proof}[Proof of Lemma~\ref{lem:general_case_g}]
Fix a document $x$ of length $m$ and a document probability vector
$w$. Conditioned on the assignment of each word in the document to a
topic, probability of a document factorizes as
\begin{align*}
\PP(x | w) &= \sum_{z \in [K]^m} \prod_{i=1}^m w_{z_i} O(x_i | z_i)
= \sum_{z \in [K]^m} \left( \prod_{i=1}^m w_{z_i} \right) \left( \prod_{i=1}^m O(x_i | z_i) \right)
=  \pi(w)^\top \psi(x),
\end{align*}
where the last line follows from collecting like terms. 
Using the form of $g^\star$ from above, we have
\begin{align*}
g^\star(x,x')  &= \frac{\PP(x^{(1)} = x, x^{(2)} = x')}{\PP(x^{(1)} = x) \PP(x^{(2)} = x')} 
= \frac{\int_w \PP(x^{(1)} = x | w) \PP( x^{(2)} = x' |w) \, d \PP(w) }{\PP(x^{(1)} = x) \PP(x^{(2)} = x')} \\
&= \frac{\int_w  \PP( x^{(2)} = x' |w) \, d \PP( w| x^{(1)} = x) }{\PP(x^{(2)} = x')} 
= \frac{\int_w \pi(w)^\top \psi(x) \, d \PP( w| x^{(1)} = x) }{\PP(x^{(2)} = x')} 
=  \frac{\eta(x)^\top \psi(x')}{\PP(x^{(2)} = x')}. \tag* \qedhere
\end{align*}
\end{proof}

\def\fmax{f_{\max}}

\subsection{Error analysis}
For the error analysis, recall that $\Dcal_c$ is our contrastive
distribution and 
\begin{align*}
f^\star(\xb,\xb') &\defeq \PP(y=1 \mid \xb,\xb'),\\
g^\star(\xb,\xb') &\defeq \frac{f^\star(\xb,\xb')}{1-f^\star(\xb,\xb')} = \frac{\PP(\xb^{(1)} = \xb, \xb^{(2)} = \xb')}{\PP(\xb^{(1)} = \xb)\PP(\xb^{(2)} = \xb')}, \\
\phi^\star(\xb) &\defeq g^\star(x, l_{1:M}) =  (g^\star(\xb,\lb_1),\ldots,g^\star(\xb,\lb_M))
\end{align*}
where $l_1, \ldots, l_M$ are landmark documents. Also recall our approximation $\hat{f}$ to $f^\star$, and the resulting approximations
\begin{align*}
\hat{g}(x,x') &\defeq \frac{\hat{f}(x,x')}{1 - \hat{f}(x,x')}, \\
\hat{\phi}(\xb) &\defeq (\hat{g}(\xb,\lb_1),\ldots, \hat{g}(\xb,\lb_M))
\end{align*}

Let $\eta(\xb), \psi(x)$ denote the posterior/likelihood vectors from Lemma~\ref{lem: single topic g} 
or the posterior/likelihood polynomial vectors from Lemma~\ref{lem:general_case_g}. Say
the length of this vector is $N \geq 1$.

Our goal is to show that linear functions in the representation $\hat{\phi}(\xb)$ can provide a good approximation to the target function
\begin{align*}
\xb \mapsto \eta(\xb)^\top \theta^\star
\end{align*}
where $\theta^\star \in \RR^{N}$ is some fixed vector. To this end, define the risk of $\hat{\phi}$ as
\begin{align*}
  R(\hat{\phi}) \defeq \min_{\vb} \EE_{\xb \sim \mu_1} (\eta(\xb)^\top\theta^\star - \hat{\phi}(\xb)^\top \vb)^2 .
\end{align*}

By~\pref{lem: single topic g} or \pref{lem:general_case_g}, we know that for any $\xb,\xb'$ we have
\begin{align*}
g^\star(\xb,\xb') = \frac{\eta(\xb)^\top\psi(\xb')}{\PP(\xb^{(2)} = \xb')}.
\end{align*}
Recall the matrix
\begin{align*}
L := \rbr{\frac{\psi(\lb_1)}{\PP(\xb^{(2)} = \lb_1)}, \ldots, \frac{\psi(\lb_M)}{\PP(\xb^{(2)} = \lb_M)}}.
\end{align*}
This matrix is in $\RR^{N \times M}$. If $L$ has full row rank, then
\begin{align*}
\eta(\xb)^\top\theta^\star = \eta(\xb)^\top L L^\dagger \theta^\star = \phi^\star(\xb)^\top \vb^\star
\end{align*}
where
\[ \phi^\star(\xb) := (g^\star(\xb,\lb_1),\ldots,g^\star(\xb,\lb_M)) \]
and $\vb^\star = L^\dagger \theta^\star$. Thus, $R(\phi^\star) = 0$. We will
show that $R(\hat{\phi})$ can be bounded as well.

\newtheorem*{theorem*}{Theorem}
\begin{theorem}
  \label{thm:error-general}
Suppose the following holds.
\begin{itemize}
  \item[(1)] There is a constant $\sigma_{\min} > 0$ such that for any $\delta \in (0,1)$, there is a number $M_0(\delta)$ such that for an iid sample $l_1, \ldots, l_M$ with $M\geq M_0(\delta)$, with probability $1-\delta$, the matrix
\[ L = 
\begin{bmatrix}
\frac{\psi(l_1)}{\PP(x^{(2)} = l_1)} & \cdots & \frac{\psi(l_M)}{\PP(x^{(2)} = l_M)}
\end{bmatrix} \]
has minimum singular value at least $\sigma_{\min} \sqrt{M}$.
	\item[(2)] There exists a value $f_{\max} \in (0,1)$ such that for all documents $x$ and landmarks $l_i$
\[ 0 < \hat{f}(x,l_i), f^\star(x, l_i) \leq f_{\max}. \]
\end{itemize}
    Let $\hat f$ be the function returned by the contrastive learning algorithm, and let
    \[ \varepsilon_n := \EE_{(x, x') \sim \Dcal_c} \left[ \left( \hat{f}(x,x') - f^\star(x,x') \right)^2 \right] \]
    denote its mean squared error.
    For any $\delta \in (0,1)$, if $M \geq M_0(\delta/2)$, then with probability at least $1-\delta$ over the random draw of $\lb_1,\ldots,\lb_M$, we have
  \begin{align*}
    R(\hat{\phi}) \leq \frac{\nbr{\theta^\star}_2^2}{\sigma_{\min}^2 (1-\fmax)^4} \rbr{2 \varepsilon_n + \sqrt{\frac{2\log(2/\delta)}{M}}}.
  \end{align*}
\end{theorem}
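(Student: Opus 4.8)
The plan is to upper bound $R(\hat\phi)$ by committing to the specific linear predictor $\vb^\star := L^\dagger \theta^\star$ and tracking the two error sources it incurs: the discrepancy between the learned representation $\hat\phi$ and the oracle representation $\phi^\star$, and the conditioning of the landmark matrix $L$. First I would invoke hypothesis~(1): with probability $1-\delta/2$ the $N\times M$ matrix $L$ has smallest nonzero singular value at least $\sigma_{\min}\sqrt M$, hence full row rank and $LL^\dagger=I_N$. Since Lemma~\ref{lem: single topic g} / Lemma~\ref{lem:general_case_g} give $\phi^\star(\xb)=L^\top\eta(\xb)$, this yields the exact identity $\eta(\xb)^\top\theta^\star = \eta(\xb)^\top L L^\dagger\theta^\star = \phi^\star(\xb)^\top\vb^\star$. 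Plugging $\vb=\vb^\star$ into the definition of $R(\hat\phi)$ and applying Cauchy--Schwarz then reduces everything to
\[ R(\hat\phi)\ \le\ \nbr{\vb^\star}_2^2\,\EE_{\xb\sim\mu_1}\nbr{\phi^\star(\xb)-\hat\phi(\xb)}_2^2,\qquad \nbr{\vb^\star}_2^2 = \nbr{L^\dagger\theta^\star}_2^2 \le \frac{\nbr{\theta^\star}_2^2}{\sigma_{\min}^2 M}. \]

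Next I would control the representation error $\EE_{\xb\sim\mu_1}\nbr{\phi^\star(\xb)-\hat\phi(\xb)}_2^2 = \sum_{j=1}^M \EE_{\xb\sim\mu_1}\big(g^\star(\xb,\lb_j)-\hat g(\xb,\lb_j)\big)^2$ pointwise. The algebraic identity $g^\star-\hat g = (f^\star-\hat f)/\big((1-f^\star)(1-\hat f)\big)$, combined with hypothesis~(2), which forces the denominator to be at least $(1-\fmax)^2$, gives $(g^\star-\hat g)^2\le (f^\star-\hat f)^2/(1-\fmax)^4$ everywhere. So it remains to bound $\sum_{j=1}^M \EE_{\xb\sim\mu_1}\big(f^\star(\xb,\lb_j)-\hat f(\xb,\lb_j)\big)^2$ in terms of the contrastive MSE $\varepsilon_n$.

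The crucial observation is that the landmark losses are governed by the product measure $\mu_1\otimes\mu_2$, which is exactly the law of $(\xb,\xb')$ under $\Dcal_c$ conditioned on $y=0$; since $\PP_{\Dcal_c}(y=0)=\tfrac12$, this gives $\EE_{\lb\sim\mu_2}\EE_{\xb\sim\mu_1}\big(f^\star(\xb,\lb)-\hat f(\xb,\lb)\big)^2 \le 2\varepsilon_n$. As $\lb_1,\dots,\lb_M$ are i.i.d.\ from $\mu_2$, and (conditioning on the learned $\hat f$) the per-landmark loss $\EE_{\xb\sim\mu_1}\big(f^\star(\xb,\lb_j)-\hat f(\xb,\lb_j)\big)^2$ lies in $[0,1]$ with mean at most $2\varepsilon_n$, Hoeffding's inequality gives, with probability $1-\delta/2$, $\tfrac1M\sum_{j=1}^M \EE_{\xb\sim\mu_1}\big(f^\star(\xb,\lb_j)-\hat f(\xb,\lb_j)\big)^2 \le 2\varepsilon_n + \sqrt{2\log(2/\delta)/M}$. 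Chaining the three displays, union-bounding the two $\delta/2$ failure events, and noting that the factor $M$ in the representation-error bound cancels the $1/M$ in $\nbr{\vb^\star}_2^2$, produces the stated inequality.

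The step I expect to be the main obstacle is the last one: one has to recognize that the landmark sampling distribution used by the algorithm (drawing from $\mu_2$) is precisely what lets the contrastive objective $\varepsilon_n$ control the landmark losses, via the $y=0$ branch of $\Dcal_c$; and one has to carry out the concentration over the random landmarks carefully enough that the $M$ appearing in the numerator is exactly absorbed by the $\sigma_{\min}^{-2}M^{-1}$ coming from $\nbr{L^\dagger\theta^\star}_2^2$, so that the final bound is independent of the number of landmarks $M$ (indeed, decreasing in $M$). Everything else is routine: the $g$-to-$f$ conversion, the Cauchy--Schwarz reduction, and the singular-value bound on $\nbr{\vb^\star}_2$.
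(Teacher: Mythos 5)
Your proposal is correct and follows essentially the same route as the paper's proof: condition on the singular-value event and a Hoeffding event over the i.i.d.\ landmarks, plug in $\vb^\star = L^\dagger\theta^\star$ with Cauchy--Schwarz, convert $g$-errors to $f$-errors via the $(1-f_{\max})^4$ bound, and use that $\mu_1\otimes\mu_2$ is the $y=0$ half of $\Dcal_c$ to pick up the factor $2\varepsilon_n$. The only cosmetic difference is that you bound the mean by $2\varepsilon_n$ before applying Hoeffding whereas the paper concentrates around $\EE_{\mu_1\otimes\mu_2}(f^\star-\hat f)^2$ and converts to $2\varepsilon_n$ at the end; the resulting bound is identical.
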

\begin{remark}
  \pref{thm:error} follows from this theorem by additionally conditioning on the event that $\hat f$ has the error bound in \pref{assum: learning guarantee}, and appropriately setting the failure probabilities $\delta$.
\end{remark}
\begin{proof}[Proof of Theorem~\ref{thm:error-general}]
  We first condition on two events based on the sample $\lb_1,\dotsc,\lb_M$.
  The first is the event that $L$ has full row rank and smallest non-zero singular value at least $\sqrt{M} \sigma_{\min} > 0$; this event has probability at least $1-\delta/2$.
  The second is the event that
  \begin{align}
    \frac1M \sum_{j=1}^M \EE_{\xb \sim \mu_1} \rbr{f^\star(\xb,\lb_j) - \hat{f}(\xb,\lb_j)}^2
    & \leq \EE_{(\xb,\xb')\sim\mu_1\otimes\mu_2}\rbr{f^\star(\xb,\lb_j) - \hat{f}(\xb,\lb_j)}^2 + \sqrt{\frac{2\log(2/\delta)}{M}} .
    \label{eq:hoeffding_event}
  \end{align}
  By Hoeffding's inequality and the assumption that $\hat{f}$ and $f^\star$ have range $[0,\fmax] \subseteq [0,1]$, this event also has probability at least $1-\delta/2$.
 By the union bound, both events hold simultaneously with probability at least $1-\delta$.We henceforth condition on these two events for the remainder of the proof.

Since $L$ has full row rank, via Cauchy-Schwarz, we have
\begin{align*}
  R(\hat{\phi})
  & = \min_{\vb} \EE_{\xb \sim \mu_1} (\eta(\xb)^\top\theta^\star - \hat{\phi}(\xb)^\top \vb)^2 
  \leq \EE_{\xb \sim \mu_1} (\eta(\xb)^\top \theta^\star - \hat{\phi}(\xb)^\top \vb^\star)^2\\
& = \EE_{\xb \sim \mu_1} ((\phi^\star(\xb)^\top - \hat{\phi}(\xb))^\top \vb^\star)^2
\leq \EE_{\xb\sim\mu_1} \nbr{\vb^\star}_2^2 \nbr{\phi^\star(\xb)^\top - \hat{\phi}(\xb)}_2^2 \\
& = \nbr{\vb^\star}_2^2 \cdot \EE_{\xb\sim\mu_1} \nbr{\phi^\star(\xb)^\top - \hat{\phi}(\xb)}_2^2.
\end{align*}
We analyze the two factors on the right-hand side separately.

\paragraph{Analysis of $\vb^\star$.}
For $\vb^\star$, we have
\begin{align*}
  \nbr{\vb^\star}_2^2 \leq \nbr{L^\dagger}_2^2 \nbr{\theta^\star}_2^2
  & \leq \frac{1}{M} \sigma_{\min}^2 \nbr{\theta^\star}_2^2 ,
\end{align*}
where we have used the fact that $L$ has smallest non-zero singular value at least $\sqrt{M}\sigma_{\min}$.

\paragraph{Analysis of $\phi^\star-\hat{\phi}$.}
For the other term, we have
\begin{align*}
\EE_{\xb \sim \mu_1} \nbr{\phi^\star(\xb) - \hat{\phi}(\xb)}_2^2 &= \sum_{j=1}^M \EE_{\xb \sim \mu_1} (g^\star(\xb,\lb_j) - \hat{g}(\xb,\lb_j))^2\\
& = \sum_{j=1}^M \EE_{\xb \sim \mu_1} \frac{\abr{f^\star(\xb,\lb_j) - \hat{f}(\xb,\lb_j)}^2}{(1-f^\star(\xb,\lb_j))^2(1-\hat{f}(\xb,\lb_j))^2}\\
& \leq \frac{1}{(1-\fmax)^4}\sum_{j=1}^M \EE_{\xb \sim \mu_1} \rbr{f^\star(\xb,\lb_j) - \hat{f}(\xb,\lb_j)}^2 \\
& \leq \frac{M}{(1-\fmax)^4}\rbr{ \EE_{(\xb,\xb')\sim\mu_1\otimes\mu_2}\rbr{f^\star(\xb,\xb') - \hat{f}(\xb,\xb')}^2 + \sqrt{\frac{2\log(2/\delta)}{M}} } ,
\end{align*}
where the final inequality follows from \eqref{eq:hoeffding_event}.


\paragraph{Wrapping up.}

Putting everything together, we have
\begin{align}
  R(\hat{\phi})
  & \leq \frac{\nbr{\theta^\star}_2^2}{\sigma_{\min}^2(1-\fmax)^4}
  \rbr{ \EE_{(\xb,\xb')\sim\mu_1\otimes\mu_2}\rbr{f^\star(\xb,\xb') - \hat{f}(\xb,\xb')}^2 + \sqrt{\frac{2\log(2/\delta)}{M}} } .
  \label{eq:penultimate_bound}
\end{align}
To conclude, we observe that half of the probability mass in $\Dcal_c$ is $\mu_1 \otimes \mu_2$, so
\[
  \varepsilon_n = \EE_{(\xb,\xb')\sim\Dcal_c}\rbr{f^\star(\xb,\xb') - \hat{f}(\xb,\xb')}^2
  \geq \frac12 \EE_{(\xb,\xb')\sim\mu_1\otimes\mu_2}\rbr{f^\star(\xb,\xb') - \hat{f}(\xb,\xb')}^2 .
\]
Rearranging and combining with \eqref{eq:penultimate_bound} proves the claim.
\end{proof}

\paragraph{Calculations about the minimum singular value.}
Suppose we are in the single topic case where $w \in
\{e_1,\ldots,e_K\}$. Assume that $\min_k \Pr(w=e_k)\geq w_{\min}$.
Further assumes that each topic $k$ has an anchor word $a_k$,
satisfying $O(a_k | z = e_k) \geq a_{\min}$. Then we will show that
when $M$ and $m$ are large enough, the matrix $L$ whose columns are
$\psi(x)/\PP(x)$ will have large singular values.

First note that if document $x$ contains $a_k$ then $\psi(x)$ is one
sparse, and satisfies
\begin{align*}
\textrm{if $a_k \in x$:}~~ \frac{\psi(x)}{\PP(x)} = \frac{e_k \PP(x | w=e_k)}{\sum_{k'} \PP(w=k')\PP(x | w=k')} = e_k/\PP(w=k')
\end{align*}
Therefore, the second moment matrix satisfies
\begin{align*}
\EE \frac{\psi(x)\psi(x)^\top}{\PP(x)^2} 
\succeq \sum_{k=1}^K \PP(w=e_k) \PP(a_k \in x \mid e_k) \EE\sbr{\frac{\psi(x)\psi(x)^\top}{\PP(x)} \mid a_k \in x, w=e_k}
 = \sum_{k=1}^K \frac{\PP(a_k \in x \mid e_k)}{\PP(w=e_k)} e_ke_k^\top
\end{align*}
Now, if the number of words per document is $m \geq 1/a_{\min}$ then
\begin{align*}
\PP(a_k \in x \mid e_k) = 1 - (1 - O(a_k \mid e_k))^{m} \geq 1-\exp(-m O(a_k|e_k)) \geq 1 - \exp(-ma_{\min}) \geq 1 - 1/e.
\end{align*}
Finally, using the fact that $\PP(w=e_k) \leq 1$, we see that the
second moment matrix satisfies
\begin{align*}
\EE\frac{\psi(x)\psi(x)^\top}{\PP(x)^2} \succeq (1-1/e) I_{K\times K}
\end{align*}
For the empirical matrix, we perform a crude analysis and apply the
Matrix-Hoeffding inequality. We have
$\nbr{\psi(x)\psi(x)^\top/\PP(x)^2}_2\leq Kw_{\min}^{-2}$ and so with probability at least $1-\delta$, we have
\begin{align*}
\nbr{\frac{1}{M}\sum_{i=1}^M \frac{\psi(l_i)\psi(l_i)^\top}{\PP(l_i)} - \EE\frac{\psi(x)\psi(x)^\top}{\PP(x)^2}}_2 \leq \sqrt{\frac{8 K\log(K/\delta)}{Mw_{\min}^2}}.
\end{align*}
If we take $M \geq \Omega(K\log(K/\delta)/w_{\min}^2)$ then we will
have that the minimum eigenvalue of the empirical second moment matrix
will be at least $1/2$.

\vfill

\end{document}